\pgfplotsset{
	compat=newest,
	plot coordinates/math parser=false,
	tick label style={font=\scriptsize, /pgf/number format/fixed},
	label style={font=\small},
	every axis/.append style={
		tick align=outside,
		clip mode=individual,
		scaled ticks=false,
		thick,
		tick style={semithick, black}
	}
}
\newlength\figurewidth
\newlength\figureheight
\newlength\sbsfigurewidth
\newlength\sbsfigureheight
\newcommand{\ens}{\acro{ENS}\xspace}
\newcommand{\bens}{batch-\ens}
\newcommand{\gb}{greedy-batch\xspace}
\newcommand{\bmg}{\acro{BMG}s\xspace}
\newcommand{\mR}{\mathcal{R}}
\newcommand{\mX}{\mathcal{X}}
\newcommand{\mc}[1]{\mathcal{#1}}
\newcommand{\data}{\mc{D}}
\newcommand{\given}{\mid}
\newcommand{\E}{\mathbb{E}}
\newcommand{\acro}[1]{\textsc{\MakeLowercase{#1}}}
\DeclareMathOperator*{\argmax}{arg\,max}
\newtheorem{theorem}{Theorem}
\newcommand{\opt}{\acro{OPT}\xspace}
\title{Efficient nonmyopic active search with applications in drug and materials discovery}
\author{
 Shali Jiang\\
  CSE, WUSTL\\
  St. Louis, MO 63130 \\
  \texttt{jiang.s@wustl.edu} \\
   \And
   Gustavo Malkomes \\
   CSE, WUSTL\\
   St. Louis, MO 63130 \\
   \texttt{luizgustavo@wustl.edu} \\
   \And
   Benjamin Moseley \\
   Tepper School of Business, CMU\\
   Pittsburgh, PA 15213 \\
   \texttt{moseleyb@andrew.cmu.edu}
   \And
  Roman Garnett \\
  CSE, WUSTL\\
  St. Louis, MO 63130 \\
  \texttt{garnett@wustl.edu} \\
}
\begin{document}

\maketitle

\begin{abstract}

Active search is a learning paradigm for actively identifying as many
members of a given class as possible. A critical target scenario is
high-throughput screening for scientific discovery, such as drug or
materials discovery. 
In this paper\footnote{This paper summarizes the contributions of two papers published at ICML 2017\cite{jiang2017efficient} and accepted at NIPS 2018\cite{jiang2018efficient}. Proofs, related work, and some experimental results are omitted.}, 
we approach this problem in Bayesian decision framework.
We first derive the Bayesian optimal policy under a natural utility, and establish a theoretical hardness of active search, proving that the optimal policy can not be approximated for any constant ratio.
We also study the batch setting for the first time, where a batch of $b>1$ points can be queried at each iteration. 
We give an asymptotic lower bound, linear in batch size, on the adaptivity gap: how much we could lose if we query $b$ points at a time for $t$ iterations, instead of one point at a time for $bt$ iterations.
We then introduce a novel approach to nonmyopic approximations of the optimal policy that admits efficient computation. 
Our proposed policy can automatically trade off exploration and exploitation, without relying on any tuning parameters. 
We also generalize our policy to batch setting, and propose two approaches to tackle the combinatorial search challenge. 
We evaluate our proposed policies on a large database of drug discovery and materials science. 
Results demonstrate the superior performance of our proposed policy in both sequential and batch setting; the nonmyopic behavior is also illustrated in various aspects. 

\end{abstract}


\section{Introduction}

In active search (\acro{AS}), we seek to sequentially inspect data to discover
as many members of a desired class as possible with a limited budget.  Formally,
suppose we are given a finite domain of $n$ elements $\mX = \{x_i\}_{i=1}^n$,
among which there is a rare, valuable subset $\mR \subset \mX$. We call the
members of this class \emph{targets} or \emph{positive items.} The identities of
the targets are unknown \emph{a priori,} but can be determined by querying an
expensive oracle that can compute $y =\mathds{1} \{x \in \mR\}$ for any
$x\in\mX$. Given a budget $T$ on the number of queries we can provide the
oracle, we wish to design a policy that sequentially queries items $\{x_t\} =
\{x_1, x_2, \dotsc, x_T\}$ to maximize the number of targets identified, $\sum
y_t$.
Many real-world problems can be naturally posed in terms of active search; drug
discovery \citep{garnett2015introducing, oglic_et_al_aaai_2017,
  oglic_et_al_mi_2018}, materials discovery \citep{jiang2017efficient}, and product recommendation
\citep{sutherland_et_al_kdd_2013} are a few examples.

Previous work \citep{garnett2012bayesian} has developed Bayesian optimal policies
for active search with a natural utility function.  Not surprisingly, this
policy is computationally intractable, requiring cost that grows exponentially
with the horizon.  To overcome this intractability, the authors of that work proposed using myopic lookahead policies in practice, which compute the optimal
policy only up to a limited number of steps into the future. This defines a family of policies ranging in complexity from completely greedy one-step lookahead to the optimal policy, which looks ahead to the depletion of the entire budget.
The authors demonstrated improved performance on active search over the greedy policy even when looking just two steps into the future, including in a drug-discovery setting \citep{garnett2015introducing}. The main limitation of these strategies is that they completely ignore what can happen beyond the chosen horizon, which for typical problems is necessarily limited to $\ell \leq 3$, even with aggressive pruning. More related work can be found in \citet{jiang2017efficient} and \citet{jiang2018efficient}.

In this paper, we first introduce the  Bayesian optimal policy for active search, and present a hardness result for this problem: no polynomial-time policy can achieve a constant factor approximation ratio with respect to the expected utility of the optimal policy. 

We also study batch active search for the first time, where a batch of $b>1$ points can be queried at a time. This is motivated by practical applications such as high throughput screening for drug discovery, where 96+ compounds can be processed at a time. 
Certainly this is more efficient, but the performance could be compromised for being less adaptive. 
So one interesting question is: how much do we lose?
We prove that: the optimal performance when we query one point at a time for $T$ iterations is at least $\Omega(b/\log T)$ times of that when we query $b$ points at a time for $T/b$ iterations.

We then introduce a novel \emph{nonmyopic} policy for active search that considers not only the immediate contribution of each unlabeled point
but also its potential impact on the remaining points that could be chosen afterwards. Our policy \emph{automatically} balances exploitation against exploration consistent with the labeling budget without requiring any parameters controlling this tradeoff. 

We also generalize our proposed policy to batch setting. The nonmyopia is automatically inherited, but the efficiency is not preserved due to combinatorial search. We propose two efficient approaches to approximately compute it. 

We compare our methods with several baselines by conducting experiments on numerous real datasets spanning drug discovery and material science, in both sequential and batch settings. We also illustrate the nonmyopic behavior of our proposed policies in various aspects. 
Our results thoroughly demonstrate that our policy typically significantly outperforms previously proposed active search approaches.


\section{Bayesian optimal policy for active search}
\label{sec:method}

We first introduce the \emph{optimal} policy for
active search under the general batch setting using the framework of Bayesian decision theory. Sequential active search would be a special case with batch size one \citep{garnett2012bayesian}. 
To cast batch active search into this framework, we express our preference over different datasets $\data = \bigl\{(x_i, y_i)\bigr\}$ through a natural utility: $u(\data) = \sum y_i$, which simply counts the number of targets in $\data$. Occasionally
we will use the notation $u(Y)$ for $u(\data)$ when $\data=(X,Y)$.  We now consider the problem
of sequentially choosing a set of $T$ (a given budget) points $\data$ with the
goal of maximizing $u(\data)$.  For each query we must
select a batch of $b$ ($b\ge1$) points and will then observe all their labels at the same
time.  We use $X_i = \{ x_{i, 1}, x_{i, 2}, \dotsc x_{i, b} \}$ to denote a
batch of points chosen during the $i$th iteration, and $Y_i = \{ y_{i, 1}, y_{i,
  2}, \dotsc y_{i, b}\}$ the corresponding labels.
We use $\data_i = \big\{(X_k, Y_k) \big\}_{k=1}^{i}$ to denote the observed data
after $i \leq t$ batch queries, where $t = \lceil T/b \rceil$.

We assume a probability model $\mc{P}$ is given, providing the posterior
marginal probability $\Pr(y\given x, \data)$ for any point $x\in\mc{X}$ and
observed dataset $\data$.  At iteration $i+1$ (given observations $\data_{i}$),
the Bayesian optimal policy chooses a batch $X_{i+1}$ maximizing the expected
utility at termination, recursively assuming optimal continued behavior:
\begin{equation} \label{eq:bayesian_optimal_policy}
	X_{i+1} = \argmax_{X} \E
	\bigl[
	u(\data_t \setminus \data_{i})
	\given
	X, \data_{i}
	\bigr].
\end{equation}
Note that the additive nature of our chosen utility allows us to ignore the
utility of the already gathered data in the expectation.

To derive the expected utility, we adopt the standard technique of backward
induction, as used by for example \citet{garnett2012bayesian} to analyze the
sequential case.  The base case is when only one batch is
left ($i=t-1$). The expected utility resulting from a proposed final batch $X$ is
then
\begin{equation}
\label{eq:greedy-batch}
\E
\bigl[
u(\data_t \setminus \data_{t-1})
\given
X, \data_{t - 1}
\bigr]
=
\E_{Y\given X, \data_{t-1}}
\bigl[
u(Y)
\bigr]
= {\textstyle
  \sum_{x\in X} \Pr(y=1\given x, \data_{t-1}) },
\end{equation}
where $\E_{Y\given X, \data_{i}}$ is the expectation over the joint posterior
distribution of $Y$ (the labels of $X$) conditioned on $\data_{i}$. In this
case, designing the optimal batch \eqref{eq:bayesian_optimal_policy} by
maximizing the expected utility is trivial: we select the points with the
highest probabilities of being targets, reflecting pure exploitation. This
optimal batch can then be found in $\mc{O}(n \log b)$ time using, e.g., min-heap of size $b$.

In general, when $i \le t-1$, the expected terminal utility resulting from
choosing a batch $X$ at iteration $i+1$ and acting optimally thereafter can be
written as a Bellman equation as follows:
\begin{equation}
\label{eq:expected_utility}
\E
\bigl[
	u(\data_t \setminus \data_{i})
	\given
	X, \data_{i}
\bigr]
=
\textstyle\sum_{x\in X} \Pr(y=1\given x, \data_{i})
	+
		\E_{Y\given X, \data_{i}}
		\Bigl[
		\max_{X'}
	\E\bigl[ u(\data_t \setminus \data_{i+1}) \given X', \data_{i+1} \bigr]
	\Bigr],
\end{equation}
where the first term represents the expected utility resulting immediately from
the points in $X$, and the second part is the expected future utility from the
following iterations.

The most interesting aspect of the Bayesian optimal policy is that these
immediate and future reward components in \eqref{eq:expected_utility} can be
interpreted as automatically balancing exploitation (immediate utility) and
exploration (expected future utility given the information revealed by the
present batch).

However, without further assumptions on the joint label distribution $\mc{P}$,
exact maximization of \eqref{eq:expected_utility} requires enumerating the whole
search tree of the form $ \data_{i} \rightarrow X_{i+1} \rightarrow Y_{i+1}
\rightarrow \cdots \rightarrow X_t \rightarrow Y_t. $ The branching factor of
the $X$ layers is $\binom{n}{b}$, as we must enumerate all possible batches. The
branching factor of the $Y$ layers is $2^b$, as we must enumerate all possible
labelings of a given batch.  So the total complexity of a na\"ive implementation
computing the optimal policy at iteration $i + 1$ would be a daunting
$\mc{O}\bigl( (2n)^{b(t-i)} \bigr)$.  The running time analysis in
\citep{garnett2012bayesian} is a special case of this result where $b=1$.

The optimal policy is clearly computationally infeasible, so we must resort to
suboptimal policies to proceed in practice. 
For sequential case ($b=1$), one typical workaround is to pretend there is only $\ell$ steps left and compute the policy in $\mc{O}^{\ell}$ \citep{garnett2012bayesian}. We will call these $\ell$-step lookahead policies. In our experiment we will consider $\ell=1,2$, and we refer to them as \emph{one-} or \emph{two-step} lookahead policy.
For batch case ($b>1$), we can compute a one-step policy by selecting the points with highest probabilities in $\mc{O}(n\log b)$ time, but looking even just one more step ahead would be infeasible due to combinatorial search. We will call batch one-step policy as \emph{greedy-batch}. Note $\ell$-step (for small $\ell$) policies are myopic since they can't see past the horizon of $\ell$. 

\subsection{Hardness of Approximation}
\citet{garnett2012bayesian} showed via an explicit construction that
the expected performance of the $\ell$-step policy can be arbitrarily
worse than any $m$-step policy with $\ell < m$, exploiting this
inability to ``see past'' the horizon.
We extend the above hardness result to show that no polynomial-time active search policy can be an approximation algorithm with respect to the optimal policy, in terms of
expected utility.
In particular, under the assumption that algorithms only have access to
a unit cost conditional marginal probability $\Pr(y=1\given x, \data)$ for any
$x$ and $\data$, where $|\data|$ is less than the
budget,\footnote{The optimal policy operates under these
	restrictions.}  then:
\begin{theorem}\label{thm:theorem}
	No polynomial-time policy for active search can have expected utility within a
	constant factor of the optimal policy.
\end{theorem}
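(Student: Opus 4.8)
The plan is to prove the theorem in the stated oracle model, where the only access to the instance is the unit-cost marginal $\Pr(y=1\given x,\data)$ and every other operation is free; consequently a polynomial-time policy can issue at most $q(n)$ oracle calls for some fixed polynomial $q$, so the statement reduces to a query-complexity lower bound that needs no complexity-theoretic assumption. It therefore suffices to construct, for every target ratio $c$ and every polynomial $q$, an active-search instance (a domain together with a probability model $\mc{P}$ realizable as the posterior of a genuine joint label distribution) on which the Bayes-optimal expected utility exceeds $c$ times the expected utility of any policy making at most $q(n)$ oracle calls. I would build on the explicit horizon construction of \citet{garnett2012bayesian}, which hides reward just past a fixed lookahead $\ell$; the task is to push that reward past \emph{every} polynomial probing budget while keeping it visible to the fully nonmyopic optimal policy, whose Bellman recursion \eqref{eq:expected_utility} already evaluates the oracle on exponentially many hypothetical datasets.

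The target design is a model that conceals a large treasure of $M$ targets behind a hidden \emph{configuration}: the marginal $\Pr(y=1\given x,\data)$ reveals the treasure points (returning probability near $1$) exactly when $\data$ contains that configuration, and is an uninformative, essentially flat value for every point otherwise. The location of the correct configuration would be planted among a super-polynomially large family of decoy configurations that are statistically indistinguishable under any poly-sized set of oracle values, yet collectively determined by the full oracle. The key feature is that once the configuration is \emph{located}, only a few real queries are needed to harvest the treasure, so the separation comes not from the number of real queries (the budget $T$ is the same for both players) but from the amount of \emph{oracle evaluation}: the optimal policy's exhaustive tree search probes the oracle on the revealing datasets and so effectively decodes the location, whereas $q(n)$ probes cannot.

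With such a model the two bounds separate cleanly. For the optimal policy I would exhibit a specific strategy whose analysis shows a Bayes value of at least some $U$ with $U=\Omega(M)$; the point is that the recursion in \eqref{eq:expected_utility}, evaluated over the in-context datasets, backs up the treasure's contribution, so the optimal value genuinely reflects it. For any policy making at most $q(n)$ oracle calls I would run an adversary/indistinguishability argument: answer each real query and each hypothetical oracle call consistently with all decoy configurations not yet ruled out, observe that $q(n)$ calls can eliminate only polynomially many of the super-polynomially many decoys, and commit the true configuration to a surviving one outside everything the policy touched. On the resulting instance the policy has only ever seen uninformative answers, so its expected utility is $o(U)$, while the optimal value remains $U$; a Yao-style averaging over a uniform planting yields a single fixed hard instance. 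Scaling $M$ and the decoy count with $n$ then drives the ratio above any constant $c$.

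The main obstacle is the simultaneous, opposing demand that the treasure be \emph{reward-relevant and reachable} for the optimal policy, so its Bayes value is provably large, yet \emph{informationally camouflaged} from polynomially many probes. Any mechanism naive enough to let the optimal value reflect the treasure tends to leave a detectable signature in some marginal that a poly-query policy could exploit; the heart of the construction is a bona fide joint distribution whose posterior leaks the treasure only to the \emph{conjunction} of super-polynomially many probes, flat off the configuration but sharp on it. Equally delicate is proving the indistinguishability for \emph{arbitrary adaptive} poly-query policies, strengthening the fixed-horizon argument of \citet{garnett2012bayesian}: one must verify that the adversary can always keep its answers mutually consistent across the surviving decoys, i.e.\ that no poly-size set of oracle values determines the hidden configuration. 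Establishing that consistency — equivalently, lower-bounding the decoy entropy left undetermined after $q(n)$ adaptive probes — is the crux of the whole argument.
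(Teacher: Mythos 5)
Your proposal follows essentially the same route as the paper's proof: a large ``treasure'' of targets whose location is encoded by a small hidden secret set, with marginals kept flat unless the observed data contains that configuration, so that the computationally unbounded optimal policy can locate and harvest the treasure while any policy limited to polynomially many oracle probes cannot distinguish the true configuration from super-polynomially many statistically identical decoys. The detailed construction the paper defers to \citep{jiang2017efficient} supplies exactly the adversary/indistinguishability analysis you flag as the crux, so your plan matches the paper's argument in both structure and mechanism.
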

\begin{proof}[Proof sketch]
The main idea is to construct a class of instances where a small ``secret'' set of elements encodes the locations of a large ``treasure'' of targets.
The probability of revealing the treasure is vanishingly small
without discovering the secret set; however, it is extremely unlikely to observe
any information about this secret set with polynomial-time effort.
See \citep{jiang2017efficient} for a detailed proof. 
\end{proof}

\subsection{Adaptivity gap}
For purely sequential policies (i.e., $b=1$), every point is chosen based on a
model informed by all previous observations.  However, for batch policies
$(b>1)$, points are typically chosen with less information available. For
example, in the extreme case when $b=T$, every point in our budget must be
chosen before we have observed anything, hence we might reasonably expect our
search performance to suffer. Clearly there must be an inherent cost to batch
policies compared to sequential policies due to a loss of adaptivity.  How much
is this cost?

We have proven the following lower bound on the inherent ``cost of parallelism''
in active search:
\begin{theorem}
	\label{thm:gap_bound}
	There exist active search instances with budget $T$,
	such that
	$\frac{\opt_1}{\opt_b}$ is $\Omega\bigl(\frac{b}{\log T} \bigr)$,
	where  $\opt_x$ is the expected number of targets found by the optimal batch policy with batch size $x \ge 1$.
\end{theorem}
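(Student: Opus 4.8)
The plan is to exhibit a single family of instances and to bound $\opt_1$ from below and $\opt_b$ from above on the same family. I would build the instance on a complete binary tree $\mathcal{T}$ of depth $D = \Theta(T)$, planting a ``secret'' root-to-leaf path $P^{*}$ chosen uniformly at random. A node is declared a target exactly when it lies on $P^{*}$, and the probability model reports the correct Bayesian posterior: given the labels observed so far, $\Pr(y = 1 \given v, \data)$ equals the posterior probability that $v \in P^{*}$. Because $P^{*}$ is a root-to-leaf path, a node $v$ in the subtree of a known on-path node $u$ at relative depth $j$ has posterior exactly $2^{-j}$, and, crucially, observing any node $v \in P^{*}$ reveals the entire prefix of $P^{*}$ from the root down to $v$ (all ancestors are forced onto the path, so their posteriors jump to $1$). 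This last feature is what makes the construction delicate and is the source of the main obstacle below.

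For the lower bound on $\opt_1$ I would analyze the obvious adaptive ``path-following'' policy: query the root, then repeatedly query the two children of the current deepest known on-path node, observe which child is a target, and descend. This spends two queries per level and collects one fresh target per level, so with a budget of $T$ it finds $\Omega(T)$ targets; hence $\opt_1 = \Omega(T)$. Choosing $D = T$ ensures the path never runs out. (The domain is exponentially large, but the theorem only asserts the existence of instances, and the bound is information-theoretic rather than about running time.)

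The substance is the upper bound $\opt_b = O\bigl((T/b)\log b\bigr)$, which I would prove by tracking a single potential: the \emph{frontier} $F_i$, defined as the maximum depth of any on-path node queried through batch $i$. Two facts drive the argument. First, since the known on-path nodes always form a contiguous prefix, every queried target sits at depth at most $F_t$, so the total utility of any policy is at most $F_t$. Second, conditioned on the state at the start of a batch, the frontier advances by at most $\log_2 b + O(1)$ in expectation: to push the frontier to relative depth $j$ the batch must query an on-path node at relative depth $\ge j$, which by a union bound over the $\le b$ queries happens with probability at most $b\,2^{-j}$, so the expected advance is $\sum_{j \ge 1}\Pr(\text{advance} \ge j) \le \sum_{j=1}^{\lfloor \log_2 b\rfloor} 1 + \sum_{j > \log_2 b} b\,2^{-j} = \log_2 b + O(1)$. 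Summing over the $t = T/b$ batches via the tower property gives $\E[F_t] \le (T/b)(\log_2 b + O(1))$, hence $\opt_b \le \E[F_t] = O\bigl((T/b)\log b\bigr)$. Dividing the two bounds yields $\opt_1/\opt_b = \Omega(b/\log b)$, and since $b \le T$ this is $\Omega(b/\log T)$, as claimed.

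I expect the upper bound to be the hard part, and within it the key conceptual hurdle is exactly the inference effect: a clever batch policy is not limited to ``confirming'' the single next step, since finding one deep target for free reveals a whole prefix. The frontier potential is designed precisely to neutralize this, showing that cheap inference still cannot advance knowledge faster than logarithmically per batch, because reaching relative depth $j$ forces a choice among $2^{j}$ equally likely candidates. The remaining care is to verify the per-batch expectation bound holds uniformly over all reachable states (so the tower-property summation is valid), and to confirm that budget spent ``harvesting'' already-inferred prefix nodes cannot circumvent the frontier bound, since such queries only reduce the budget available for advancing $F_i$.
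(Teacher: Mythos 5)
Your construction and two-sided strategy are sound, and in spirit they match the paper's own proof: the paper also builds a binary-tree instance in which a root-to-leaf path must be decoded, a sequential policy walks it cheaply, and a batch policy cannot decode fast enough (the paper adds one level of indirection, with the path encoding a ``treasure map'' to a separate trove of targets, and its analysis bounds the probability that a batch policy decodes the map in time rather than bounding an expected per-batch advance). Your lower bound $\opt_1 = \Omega(T)$ is fine. The genuine gap is in the step you defer as ``remaining care'': the per-batch bound $\Pr(\text{advance} \ge j \given \text{state}) \le b\,2^{-j}$ is not merely unverified --- it is false in reachable states, so the tower-property summation is invalid as written. The union bound needs every queried node at relative depth $j$ below the frontier to have posterior at most $2^{-j}$, which holds only when the region below the frontier is ``fresh.'' Earlier batches leave negative observations that eliminate candidates and boost the posteriors of the survivors; in the extreme, a chain of negative siblings (an event of probability $2^{-j}$, but reachable) forces the path by pure inference, so that a node at relative depth $j$ below the \emph{queried} frontier has posterior $1$ and the next batch advances $F$ by $j$ with probability $1$, not $b\,2^{-j}$. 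Relatedly, the frontier's knowledge can advance without any deep on-path node being queried at all (elimination of all but one candidate), contradicting the first clause of your argument. Whether the small probability of reaching such states exactly compensates the inflated advances is precisely the question, and the union bound cannot see it.

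The approach is salvageable, which also shows your claimed bounds are true. Replace the queried-frontier potential by the surprisal $\Phi_i = -\log_2 \Pr(\text{all observations through batch } i)$. First, $\Phi$ dominates the frontier pointwise: the realized history is contained in the event that the deepest queried positive lies on the path, so $\Pr(\text{history}) \le 2^{-F_t}$, hence utility $\le F_t + 1 \le \Phi_t + 1$. Second, conditional on any history, $\E[\Phi_{i+1} - \Phi_i]$ is exactly the Shannon entropy of the next batch's outcome, and this is at most $\log_2(b+1)$ \emph{uniformly over all states}: the positive answers among $b$ simultaneously chosen nodes are pairwise comparable prefixes of the path, downward closed within the batch, hence determined by the deepest positive alone, so the outcome vector takes at most $b+1$ values. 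Summing over $t = T/b$ batches gives $\opt_b \le (T/b)\log_2(b+1) + 1$, completing your argument and in fact yielding the stronger ratio $\Omega(b/\log b)$, which implies the theorem's $\Omega(b/\log T)$ since $b \le T$. This entropy bound is the honest formalization of your closing intuition that cheap inference ``cannot advance knowledge faster than logarithmically per batch,'' and it absorbs the harvesting issue automatically, since harvested queries can only move $F_t$ toward $\Phi_t$, never past it.
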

\begin{proof}[Proof sketch]
	We construct a special type of active search instance where the location
        of a large trove of positives is encoded by a binary tree, and a search
        policy must take the correct path through the tree to decode a treasure
        map pointing to these points. We design the construction such that a
        sequential policy can easily identify the correct path by walking down
        the tree directed by the labels of queried nodes. A batch policy must
        waste a lot queries decoding the map as the correct direction is only
        revealed after constructing an entire batch. We show that even the
        optimal batch policy has a very low probability of identifying the location of the
        hidden targets quickly enough, so that the expected utility is much less than that of the optimal sequential policy.
        See \citep{jiang2018efficient} for a detailed proof.
\end{proof}

Thus the expected performance ratio between optimal sequential and batch
policies, also known as \emph{adaptivity gap} in the literature
\citep{asadpour2008stochastic}, is lower bounded linearly in batch size.  This
theorem is not only of theoretical interest: it can also provide practical
guidance on choosing batch sizes. Indeed, in drug discovery, modern
high-throughput screening technologies provide many choices for batch sizes;
understanding the inherent loss from choosing larger batch sizes provides
valuable information regarding the tradeoff between efficiency and cost.


\section{Efficient nonmyopic approximations}
We first illustrate our idea of efficient nonmyopic approximation in sequential case ($b=1$), and then generalize it to the batch ($b>1$) setting.
We have seen above how to myopically approximate the Bayesian optimal policy using
an $\ell$-step-lookahead approximate policy.
Such an approximation, however,
effectively assumes that the search procedure will terminate after the
next $\ell$ evaluations, which does not reward exploratory behavior
that improves performance beyond that horizon.
We propose to continue to exactly compute the
expected utility to some fixed horizon, but to approximate the
remainder of the search differently. We will approximate the expected
utility from any remaining portion of the search by assuming that any
remaining points, $\{x_{i+2}, x_{i+3}, \dotsc, x_{t}\}$, in our budget will be
selected \emph{simultaneously} in one big batch.
One rationale is if we assume that after observing $\data_{i+1}$,
the labels of all remaining unlabeled points are conditionally independent, then this approximation recovers the Bayesian optimal policy exactly.
This assumption might seem unrealistic
at first, but when many \emph{well-spaced} points are observed, we note they
might approximately ``D-separate'' the remaining unlabeled points.
Further, as we will demonstrate in Section \ref{sec:toy_problem}, \ens naturally encourages the selection of well-spaced points (targeted exploration) in the initial state of the search.

By exploiting linearity of expectation, it is easy to
work out the optimal policy for selecting such a simultaneous batch observation after iteration $i+1$: 
we simply select the points with the highest probability of being valuable.
The resulting approximation is
\begin{equation}
\label{approximate_remainder}
\max_{x'} \mathbb{E}\bigl[u(\data_t \backslash \data_{i+1} ) \given x', \data_{i+1}\bigr] \approx
{\textstyle \sum'_{t - i - 1}} \Pr(y = 1 \given x, \data_{i+1}),
\end{equation}
where the summation-with-prime symbol $\sum'_k$ indicates that we only
sum the largest $k$ values.

At iteration $i+1$, given $\data_i$, our proposed policy selects points by maximizing the approximate 
expected utility using:
\begin{multline}
\label{eq:approximate_search_utility}
\E
\bigl[
u(\data_t \backslash \data_{i}  )
\given
x_{i+1}, \data_{i}
\bigr]
\approx
\Pr(y_{i+1} = 1 \given x_{i+1}, \data_{i})
+ 
\underbrace{
	\E_{y_{i+1}}\Bigl[
	{\textstyle \sum'_{t-i-1}} \Pr\bigl(y = 1 \given x, \data_{i+1} \bigr)
	\Bigr]}_{\text{exploration, } {} < \, t-i-1}.
\end{multline}
We will call this policy \emph{efficient nonmyopic search} (\ens).
As in the optimal policy, we can interpret
\eqref{eq:approximate_search_utility} naturally as rewarding both
exploitation and exploration, where the exploration benefit is judged by
a point's capability to increase the top probabilities among
currently unlabeled points.  We note further that in
\eqref{eq:approximate_search_utility} the reward for exploration
\emph{naturally decreases over time} as the budget is depleted, exactly as in the optimal policy. In
particular, the very last point $x_t$ is chosen greedily by
maximizing probability, agreeing with the true optimal policy.
The second-to-last point is also guaranteed to match the optimal
policy.

Note that we may also use the approximation in
\eqref{approximate_remainder} as part of a finite-horizon lookahead
with $\ell > 1$, producing a family of increasingly expensive but
higher-fidelity approximations to the optimal policy, all retaining
the same budget consciousness. The approximation in
\eqref{eq:approximate_search_utility} is equivalent to a one-step
maximization of \eqref{approximate_remainder}. We will see in our
experiments that this is often enough to show massive gains in
performance, and that even this policy shows clear awareness of the
remaining budget throughout the search process, automatically and
dynamically trading off exploration and exploitation.
\subsection{Generalization to batch setting}
The generalization of \ens to batch setting is conceptually simple: how many targets would we expect to find if, after selecting the current batch, we spent the entire remaining budget simultaneously?  If this were the case, then similar to \eqref{eq:approximate_search_utility}, we have
(let $f(X \given \data_{i}) \equiv \E[ u(\data_t \setminus \data_{i}) \given X, \data_{i}]$):
\begin{equation} \label{eq:bens1}
	f(X \given \data_{i})
	=
	{\textstyle
	\sum_{x\in X} \Pr(y=1\given x, \data_{i}) }
	+
	{\textstyle
	\E_{Y\given X, \data_{i}}
	\left[
	 \sum'_{T-b-|\data_{i}|} \Pr\left( y'=1\given x', \data_{i}, X, Y \right)
	 \right].
	 }
\end{equation}

The nonmyopia of \eqref{eq:bens1} is automatically inherited in generalizing
from sequential to batch setting due to explicit budget awareness. Unfortunately, the efficiency of the sequential \ens policy is not preserved.  Direct maximization
of \eqref{eq:bens1} still requires combinatorial search over all subsets of size
$b$. Moreover, to evaluate a given batch, we need to enumerate all its possible
labelings ($2^b$ in total) to compute the expectation in the second
term. Accounting for the cost of conditioning and summing the top probabilities,
the total complexity would be $\mc{O}\bigl( (2n)^b\, n\log T\bigr)$.

We propose two strategies to tackle these computational problems below.

\textbf{Sequential simulation.}
The cost of computing the proposed batch policy has exponential dependence on the batch size $b>1$.
To avoid this, our first idea is to reduce the batch to sequential case ($b=1$).
We select points one at a time to add to a batch by maximizing the sequential \ens score (i.e., \eqref{eq:bens1} with $b=1$). We then use some fictional labeling oracle $\mc{L}\colon \mc{X}\to \{0,1\}$ to simulate its label and incorporate the observation into our dataset. We repeat this procedure until we have selected $b$ points. 
Note that we could use this basic construction replacing \ens by any other sequential policy $\pi$, such as the one-step or two-step Bayesian optimal policies \citep{garnett2012bayesian}.

We will see that the behavior of the fictional labeling oracle has large influence on the behavior of resulting search policies. Here we will consider four fictional oracles:
(1) sampling, where we randomly sample a label from its marginal distribution; 
(2) most-likely, where we assume the most-likely label; 
(3) pessimistic, where we always believe all labels are negative; and
(4) optimistic, where always believe all labels are positive.

Sequential simulation is a common \emph{heuristic} in similar settings like batch Bayesian optimization \citep{jiang2018efficient}.
Here we provide some mathematical rationale of this procedure in a special case, inspired by the work of \citet{wang2017thesis}: 
the batch constructed by sequentially simulating the greedy active search policy with a pessimistic oracle near-optimally maximizes the probability that \emph{at least one} of the points in the batch is positive. This is easy to prove using submodularity \citep{nemhauser1978analysis}. 
For more details, see \cite{jiang2018efficient}. 
Interestingly, the probability of at least one positive can be considered as an active search counterpart of a batch version of \emph{probability of improvement} for Bayesian optimization \citep{kushner_jbe_1964}.

\textbf{Greedy approximation.}
Our second strategy is motivated by our conjecture that \eqref{eq:bens1} is a monotone submodular function under reasonable assumptions.
If that is the case, then again a \emph{greedy} batch construction returns a batch with near-optimal score \citep{nemhauser1978analysis}.
We therefore propose to use a greedy algorithm to sequentially construct the batch by maximizing the marginal gain. That is, we begin with
an empty batch $X = \emptyset$. We then sequentially add $b$ points by adding the point maximizing the marginal gain:
\begin{equation}
	x
	=
	{\textstyle\argmax_x  \Delta_f(x \given X) },
	 \label{eq:batch-ens-greedy}
\end{equation}
where
\begin{equation}
\Delta_f(x \given X) = f(X \cup \{x\} \given \data_i) - f(X \given \data_i). \label{eq:marginal_gain}
\end{equation}
When $b$ is large, this procedure is still expensive to compute due to the expectation term in \eqref{eq:bens1}, requiring $\mc{O}(2^b)$ operations to compute exactly.
Here we approximate the expectation using Monte Carlo sampling with a small set of samples of the labels.
Specifically,
given a batch of points $X$, we approximate \eqref{eq:bens1} with samples
$S = \{\tilde{Y}: \tilde{Y} \sim  Y \given X, \data_i \}$:
\begin{equation}
f(X \given \data_i)
\approx
	\textstyle
	\sum_{x\in X} \Pr(y=1\given x, \data_i)
	+
	\textstyle
	\frac{1}{|S|}\sum_{Y\in S}
	\left[
	 \sum'_{T-b-|\data_i|} \Pr\left( y'=1\given x', \data_i, X, Y \right)
	 \right].
	  \label{eq:approx_bens1}
\end{equation}

We will call the batch policy described above \bens.
Note \bens using \emph{one} sample of the labels in a batch is similar to sequential simulation of \ens with the sampling oracle, though the two policies are motivated in different ways.


\subsection{Implementation and pruning}
All these policies can be implemented efficiently if we use a model with local structure such as $k$-nn (i.e., observing a point can only affect the probabilities of a very small subset of other points). Furthermore, we develop an aggressive pruning technique that resembles \emph{lazy evaluation}; we observe on drug discovery datasets, over 98\% of the candidate points can be pruned in each iteration on average. See \cite{jiang2018efficient} for more details.


\section{Experiments}
\label{sec:experiments}
\subsection{Nonmyopic Behavior} 
\label{sec:toy_problem}

\begin{figure}
	\centering
	\begin{subfigure}[b]{0.20\textwidth}
		\includegraphics{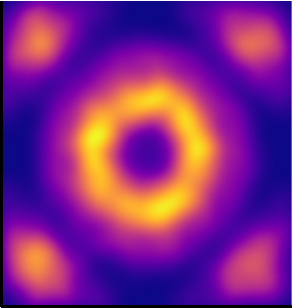}
		\subcaption{}
		\label{fig:toy_begin_ours}
	\end{subfigure}
	\hspace{0.5cm}
	\begin{subfigure}[b]{0.20\textwidth}
		\includegraphics{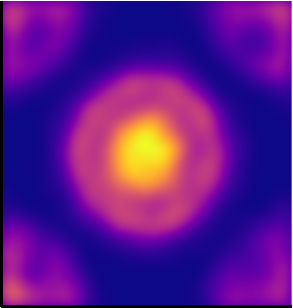}
		\subcaption{}
		\label{fig:toy_end_ours}
	\end{subfigure}
	\hspace{0.5cm}
	\begin{subfigure}[b]{0.20\textwidth}
		\includegraphics{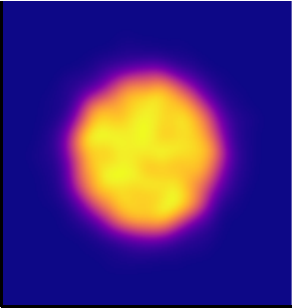}
		\subcaption{}
		\label{fig:toy_begin_myopic}
	\end{subfigure}
	\hspace{0.5cm}
	\begin{subfigure}[b]{0.20\textwidth}
		\includegraphics{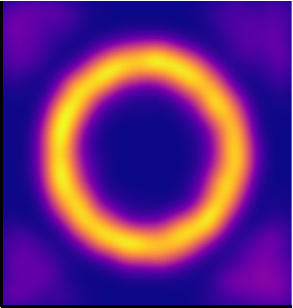}
		\subcaption{}
		\label{fig:toy_end_myopic}
	\end{subfigure}
	\caption{Kernel density estimates of the distribution of points chosen by \ens and two-step lookahead during two different time intervals. (a) \ens, first half. (b) \ens, second half. (c) two-step, first half. (d) two-step, second half.}
	\label{fig:toy_example}
\end{figure}
We first illustrate the nonmyopic behavior of \ens in contrast to the myopic two-step lookahead policy. 
We adapted the toy example presented by \citet{garnett2012bayesian}.  
Let $I\triangleq [0,1]^2$ be the unit square. We repeated the following
experiment 100 times. We selected 500 points iid uniformly at random
from $I$ to form the input space $\mX$.  We create an active search
problem by defining the set of targets $\mR \subseteq \mX$ to be all
points within Euclidean distance $\nicefrac{1}{4}$ from either the
center or any corner of $I$.
We took the closest point to the center
(always a target) as an initial training
set. We then applied \acro{ENS} and the two-step lookahead policies to sequentially select 200 further points for labeling.

Figure \ref{fig:toy_example} shows a kernel density
estimate of the distribution of locations selected by both methods during two time intervals. Figures \ref{fig:toy_example}(a--b)
correspond to our method; Figures \ref{fig:toy_example}(c--d) to two-step
lookahead.
Figures \ref{fig:toy_example}(a, c) consider the distribution of the first 100 selected locations;
Figures \ref{fig:toy_example}(b, d) consider the last 100.
The qualitative difference between these strategies is clear. The myopic policy focused on
collecting all targets around
the center (Figure \ref{fig:toy_example}(c)),
whereas our policy explores the boundaries of the center
clump with considerable intensity, as well as some
of the corners (Figure \ref{fig:toy_example}(a)).
As a result, our policy is capable of finding some of targets
in the corners, whereas two-step lookahead hardly ever can
(Figure \ref{fig:toy_example}(d)). We
can also see that the highest probability mass in
Figure \ref{fig:toy_example}(b) is the center, which
shows that our policy typically saves many high-probability
points until the end. On average,
the \acro{ENS} policy found about 40 more targets at termination
than the two-step lookahead policy.

\begin{wraptable}{R}{0.52\textwidth}
	\begin{minipage}{.48\textwidth}
	\caption{\bmg dataset: 
		Average number of targets found by the one- and two-step policies and \ens with different five budgets at specific time steps. The performance of the best method at each time waypoint is in bold.
	} \label{tab:bmg_vary_budget}
	\begin{tabular}{llllll} \toprule
		& \multicolumn{5}{c}{query number} \\ \cmidrule(l){2-6}
		policy	& 100	& 300	& 500	& 700	& 900	\\ \midrule
		one-step	& 90.8	& 273	& 450	& 633	& 798	\\
		two-step	& 91.0	& 273	& 452	& 632	& 802	\\
		\midrule
		\ens--900	& 89.0	& 270	& 453	& 635	& \textbf{815}	\\
		\ens--700	& 91.3	& 276	& 460	& \textbf{645}	\\
		\ens--500	& 92.4	& 279	& \textbf{466}	\\
		\ens--300	& 92.8	& \textbf{279}	\\
		\ens--100	& \textbf{94.5}	\\
		\bottomrule
	\end{tabular}

	\end{minipage}
\end{wraptable}

\subsection{Finding bulk metallic glasses} 
The goal here is to find novel alloys capable of forming bulk metallic glasses (\bmg).
Compared to crystalline alloys, \bmg have many desirable properties, including high toughness and good wear resistance.
This dataset consists of 118\,678 known alloys from the materials literature \cite{kawazoe_et_al_1997, ward_et_al_arxiv_2016}, among which 4\,746 (about 4\%) are known to exhibit glass-forming ability, which we define as positive/targets.

\textbf{Adaptation to budget.}
We conduct experiments on this dataset to demonstrate \ens' ability to adapt to the budget, compared to one- and two-step policies. 
We use $k$-nn model with $k=50$.
We select a single target uniformly at random to form an initial training set. 
We use each policy to sequentially select $t$ points for labeling.  The experiment was repeated 20 times, varying the initial seed target. 
We test for $t = 100, 300, 500, 700, 900$ and report the average number of targets found at these time points for each method in Table \ref{tab:bmg_vary_budget}.

We have the following observations from the table.
First, \ens~performs better than the myopic baseline policies for every budget.
Second, \ens~is able to adapt to the specified budget.
For example, when comparing performance after 100 queries, \ens-100 has located many more targets than the \ens~methods with greater budgets, which at that time are still strongly rewarding exploration.
A similar pattern holds when comparing other pairs of \ens~variations.

\subsection{Virtual drug screening} 
\begin{wrapfigure}{R}{0.52\textwidth}
	\begin{minipage}{.48\textwidth}
		\includegraphics{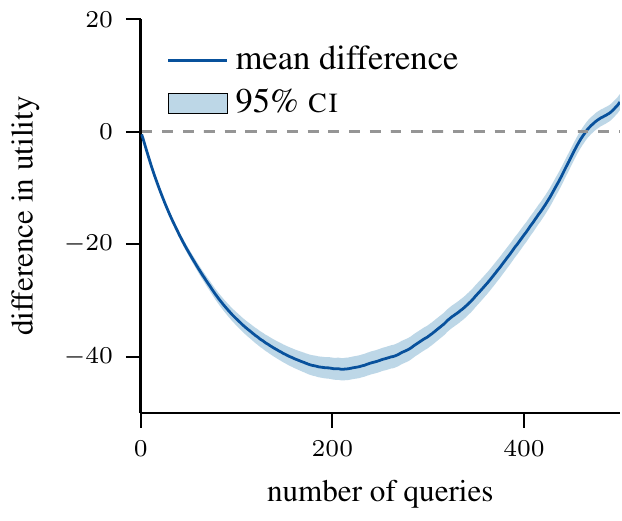}
		\caption{The average difference in cumulative targets found  between \ens and the two-step policy, averaged over 120 activity classes and 20 experiments on the \acro{ECFP4} fingerprint.}\label{fig:mean_diff}
	\end{minipage}
\end{wrapfigure}
\begin{table*}
	\centering
	\caption{Number of active compounds found by various active search policies at termination for each fingerprint, averaged over 120 active classes and 20 experiments.
		Also shown is the difference of performance between \ens~and two-step lookahead and the
		results of the corresponding paired $t$-test.}
	\label{tab:summary}
	\small
	\begin{tabular}{lcccccccc} \toprule
		&\multicolumn{4}{c}{policy}   &\multicolumn{3}{c}{$t$-test results}\\
		\cmidrule(r){2-5}  \cmidrule(l){6-9}
		fingerprint							&100-\acro{nn}   & one-step  & two-step & \ens & difference & $p$-value &  \multicolumn{2}{c}{95\% \acro{CI} } \\
		\midrule
		\acro{ECFP4}			& 189 & 289	& 297	& \textbf{303}	&    5.29	& $1.76 \times 10^{-3}$	& 2.01	& 8.56 \\
		\acro{G}pi\acro{DAPH3}	& 134 & 255	& 261	& \textbf{276}	&   14.8	& $3.90\times10^{-13}$	& 11.2	& 18.4 \\ \bottomrule
	\end{tabular}
\end{table*}

We conduct experiments on a massive database of chemoinformatic data.
The basic setting is to screen a large database of compounds searching for those that show binding activity against some biological target.  This is a basic component of drug-discovery pipelines. The dataset comprises
120 activity classes of human biological importance  selected from the Binding \acro{DB} \cite{liu2007bindingdb} database.
For each activity class, there are a small number of compounds with significant binding activity; the number of targets varies from 200 to 1\,488 across the activity classes.  From these we define 120 different active search problems.
There are also 100\,000 presumed inactive compounds selected at random from the \acro{ZINC} database \cite{zinc}; these are used as a shared negative class for each of these problems. For each compound, we consider two different feature representations,
also known as chemoinformatic fingerprints, called \acro{ECFP4} and \acro{G}pi\acro{DAPH3.} These fingerprints are binary vectors encoding the relevant chemical characteristics of the compounds; fingerprint similarities are computed via Jaccard index; see \citep{garnett2015introducing} for more details.
So in  total we have 240 active search problems, each with more than 100\,000 points, and with targets less than $1.5\%$.
We use $k$-nn model with $k=100$ for these datasets.

We perform comprehensive comparison on the $2\times 120$ virtual drug screening datasets. This time we fix the budget $t=500$. We again initialize the search by a random target, and repeat each experiment 20 times. 
We also report the performance of a baseline where we randomly sample a stratified sample of size 5\% of the database (${\sim}$5\,000 points, more than 10 times the budget of the active search policies).
From this sample, we train the same $k$-\acro{nn} model, compute the active probability of the remaining points, and query the 500 points with the highest posterior activity probabilities.

Table \ref{tab:summary} summarizes the results.
First we notice that all active search policies perform much better than the recall of a simple classification algorithm, even though they observe less than one-tenth the data.
The two-step policy is again better than the greedy policy for both fingerprints, which is consistent with the results reported in \cite{garnett2015introducing}.
The \ens policy performs significantly better than two-step lookahead; a two-sided paired $t$-test overwhelmingly rejects the hypothesis that the performance at termination is equal in both cases.

\textbf{Exploration vs. exploitation.}
Figure \ref{fig:mean_diff} shows the mean difference in cumulative targets found between \ens~and the two-step policy for the \acro{ECFP4} fingerprint. 
This plot again demonstrates the nonmyopic behavior of our proposed policy:
we very clearly observe the automatic trade-off between exploration and exploitation by our method.
In the initial stage of the search, we explore the space without
much initial reward, but around query 200,
our algorithm switches automatically to exploitation,
outperforming the myopic policy significantly at termination.

\subsection{Batch setting}
In this section, we evaluate sequential simulation of \ens and \bens against myopic baselines in batch active search. In total we evaluate 14 batch policies:
(1) \gb, coded as ``greedy'';
(2--13) sequential simulation, coded as ``ss\acro{-P-O}'', where \acro{P} (for policy) could be ``one'' (for one-step), ``two'' (for two-step), or ``\ens'',
	and \acro{O} (for oracle) could be ``s'' (sampling), ``m'' (most-likely), ``0'' (pessimistic, i.e., always-0), or ``1'' (optimistic, i.e., always-1);
(14) \bens.
Suggested by one of the the anonymous reviewers, we also compare these policies against another na\"ive baseline, which we call \emph{uncertain-greedy} batch (\acro{UGB}), where we build batches that simultaneously encourage exploration and exploitation by combining the most uncertain points and the highest probability points. We use a hyperparamter $r\in (0,1)$ to control the proportion, choosing the most uncertain points for $100r\%$ of the batch, and greedy points for the remaining $100(1-r)\%$ of the batch. We run this policy for $r\in \{0.1, 0.2, \dots, 0.9\}$, and show the best result among them.

\begin{table}
	\centering
	\caption{Results for 10 drug discovery datasets in batch setting: Average number of positive compounds found by the baseline \emph{uncertain-greedy} batch, \gb, sequential simulation and \bens policies. Each column corresponds to a batch size, and each row a policy. Each entry is an average over 200 experiments (10 datasets by 20 experiments). The budget $T$ is 500. Highlighted are the best (bold) for each batch size and those that are not significantly worse (blue italic) than the best  under one-sided paired $t$-tests with significance level $\alpha=0.05$.} \label{tab:drug_data}
	\vspace{1em}
	
\begin{tabular}{llllllllll}
\toprule
&{1} & {5} & {10} & {15} & {20} & {25} & {50} & {75} & {100}\\\hline
\acro{UGB} & - & 257.6 & 257.9 & 258.3 & 250.1 & 246.0 & 218.8 & 206.2 & 172.1 \\
{greedy} & 269.8   &  268.1   &  264.1   &  261.6   &  258.2   &  257.0   &  240.1   &  227.2   &  208.2   \\ \hline 
{ss-one-1} & 269.8   &  260.7   &  254.6   &  245.2   &  233.6   &  223.4   &  200.8   &  182.9   &  178.9   \\  
{ss-one-m} & 269.8   &  264.5   &  257.7   &  250.0   &  244.4   &  236.5   &  211.7   &  195.4   &  179.4   \\  
{ss-one-s} & 269.8   &  266.8   &  261.3   &  256.7   &  248.7   &  244.1   &  214.9   &  202.4   &  181.3   \\  
{ss-one-0} & 269.8   &  268.1   &  264.1   &  261.6   &  258.2   &  257.0   &  240.1   &  227.2   &  208.2   \\ \hline 
{ss-two-1} & 281.1   &  237.1   &  219.8   &  210.8   &  212.1   &  196.2   &  172.1   &  158.8   &  152.9   \\  
{ss-two-m} & 281.1   &  252.6   &  246.4   &  237.2   &  232.9   &  225.1   &  200.2   &  181.6   &  167.2   \\  
{ss-two-s} & 281.1   &  248.9   &  242.5   &  235.3   &  226.6   &  219.2   &  196.7   &  175.3   &  158.3   \\  
{ss-two-0} & 281.1   &  252.5   &  247.6   &  247.9   &  244.4   &  240.4   &  225.6   &  213.8   &  199.1   \\ \hline 
{ss-\ens-1} & \textbf{295.1  } &  269.4   &  247.9   &  227.2   &  223.1   &  210.3   &  185.3   &  152.6   &  148.7   \\  
{ss-\ens-m} & \textit{\textcolor{blue}{295.1  }} &  293.8   &  290.2   &  285.3   &  281.6   &  274.4   &  249.4   &  217.2   &  203.1   \\  
{ss-\ens-s} & \textit{\textcolor{blue}{295.1  }} &  289.9   &  278.3   &  269.8   &  262.6   &  255.0   &  220.8   &  185.5   &  161.2   \\  
{ss-\ens-0} & \textit{\textcolor{blue}{295.1  }} &  293.6   &  289.1   &  288.1   &  \textit{\textcolor{blue}{287.5  }} &  280.7   &  269.2   &  257.2   &  241.0   \\ \hline 
{batch-\ens-16} & \textit{\textcolor{blue}{295.1  }} &  \textbf{300.8  } &  \textbf{296.2  } &  293.9   &  \textbf{292.1  } &  \textit{\textcolor{blue}{288.0  }} &  275.8   &  \textit{\textcolor{blue}{272.3  }} &  252.9   \\  
{batch-\ens-32} & \textit{\textcolor{blue}{295.1  }} &  \textit{\textcolor{blue}{300.8  }} &  \textit{\textcolor{blue}{295.5  }} &  \textbf{297.9  } &  \textit{\textcolor{blue}{290.6  }} &  \textbf{288.8  } &  \textbf{281.4  } &  \textbf{275.5  } &  \textbf{263.5  } \\  
\bottomrule
\end{tabular}

\end{table}

We only conduct experiments on the first ten of the 120 \acro{ECFP}4 virtual drug screening datasets.
For each of the ten datasets, we still start with one random target, and repeat for 20 times. The budget is again $T=500$. 
We test  for batch sizes $b \in \{5, 10, 15, 20, 25, 50, 75, 100\}$, 
so the number of iterations $t = \lceil T/b \rceil \in \{100, 50, 34, 25, 20, 10, 7, 5\}$. 
We also show the results for sequential search ($b = 1$) as a reference. 
We test \bens with 16 and 32 samples, coded as \bens-16 and  \bens-32.
We show the number of positive compounds found in Table \ref{tab:drug_data}, averaged over the 10 datasets and 20 experiments each, so each entry in the table is an average over 200 experiments. We highlight the best result for each batch size in boldface. We conduct a paired $t$-test for each other policy against the best one, and also emphasize those that are not significantly worse than the best with significance level $\alpha=0.05$ in blue italics.

We highlight the following observations.
(1) The uncertain-greedy batch policy is worse than most of our proposed batch active search policies based on Bayesian decision framework, especially the nonmyopic ones. 
(2) The performance decreases as the batch size increases.
(3) Nonmyopic policies are consistently better than myopics ones;
in particular, \bens is a clear winner.
(4) For sequential simulation policies, the pessimistic oracle is almost always the best.

For \bens, we find batch-\ens with $32$ samples often performs better than with $16$, especially for larger batch sizes.
We have run \bens for $b=50$ with  $N\in\{2,4,8,16,32,64\}$, and find that the performance improves considerably as the number of samples increases, but the magnitude of this improvement tends to decrease with larger numbers. We believe 32 label samples offers a good tradeoff between efficiency and accuracy for $b=50$.
 
We now  discuss our observations in more detail. First we see all our proposed policies perform better than the heuristic uncertain-greedy batch, even if we optimistically assume the best hyperparameter of this policy (not to mention we hardly know what the best hyperparameter should be in practice).
Our framework based on Bayesian decision theory offers a more principled approach to batch active search (especially batch-\ens); and our methods are effectively hyperparameter-free (except the number of samples used in batch-\ens). In the following, we elaborate on the other three observations.

\begin{figure*}
	\centering
	\begin{subfigure}[b]{.45\textwidth}
		\centering
		\includegraphics{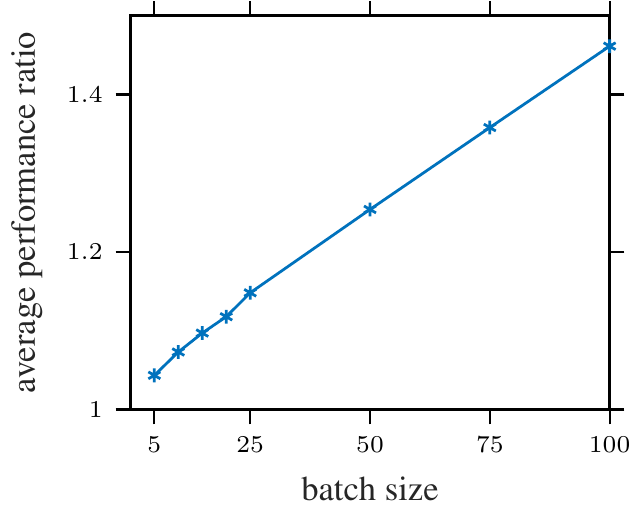}
		\subcaption{}
		\label{fig:empirical_performance_ratio}
	\end{subfigure}
	\hspace{0.5cm}
	\begin{subfigure}[b]{.45\textwidth}
		\centering
		\includegraphics{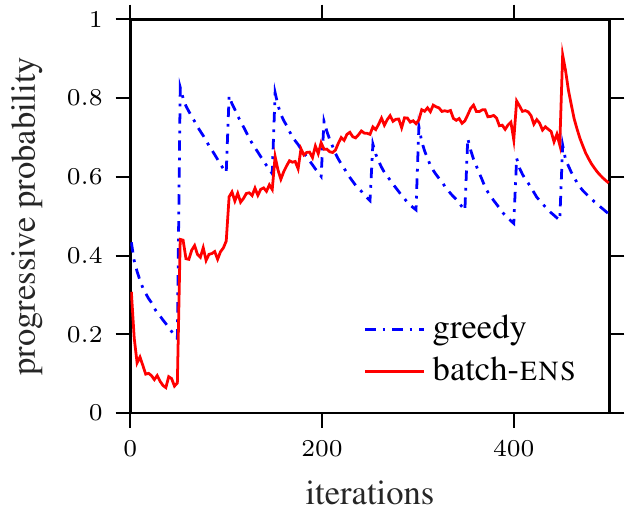}
		\subcaption{}
		\label{fig:probability_trace_drug_data_batch_size_50}
	\end{subfigure}
	\caption{(a) Average performance ratio between sequential policies and batch policies, as a function of batch size, produced using averaged results in Table \ref{tab:drug_data}.
		(b) Progressive probabilities of the chosen points of greedy and \bens-32, averaged over results for batch size 50 on all 10 drug discovery datasets and 20 experiments each.}
\end{figure*}

\textbf{Empirical adaptivity gap.}
Regardless of what policy is used, the performance in general degrades as the batch size increases. But how fast?
We average the results in Table \ref{tab:drug_data} over all policies for each batch size $b$ as an empirical surrogate for $\opt_b$ in Theorem \ref{thm:gap_bound}, and plot the resulting surrogate value of $\frac{\opt_1}{\opt_b}$ as a function of $b$ in Figure \ref{fig:empirical_performance_ratio}.
Although these policies are not optimal, the empirical performance gap matches our theoretical linear bound surprisingly well. Similar results for different budgets on a different dataset can be round in \citep{jiang2018efficient}.
These results could provide valuable guidance on choosing batch sizes.

Despite the overall trends in our results, we see some interesting exceptions. For example, in Table \ref{tab:drug_data}, batch-\ens with batch size 5 is significantly better than that with batch size 1, with a $p$-value of $0.02$ under a one-sided paired $t$-test. This is counterintuitive based on our analysis regarding the adaptivity gap. We conjecture that batch-\ens with larger batch sizes forces more (but not too much) exploration, potentially improving somewhat on sequential \ens in practice.

\textbf{Why is the pessimistic oracle better?}
Among the four fictional oracles, the pessimistic one usually performs the best for sequential simulation.
When combined with a greedy policy, we have provided some mathematical rationale before: sequential simulation then near-optimally maximizes the probability of unit improvement, which is a reasonable criterion.
Intuitively,
by always assuming the previously added points to be negative, the probabilities of nearby points are lowered, offering a repulsive force compelling later points to be located elsewhere, leading to a more diverse batch. This mechanism could help better explore the search space.  
Note this coincides with the idea of using repulsion for batch policy design for Bayesian optimization \citep{gonzalez2016batch}.

\textbf{Nonmyopic behavior revisited.}
To gain more insight into the nature of the myopic/nonmyopic behavior,
in Figure \ref{fig:probability_trace_drug_data_batch_size_50} we plot the probabilities   of the points chosen (at the iteration of being chosen) by the greedy and \bens-32 policies for batch size 50 across the drug discovery datasets. 
First, in each batch, the trend for greedy is not surprising, since every batch represents the top-50 points ordered by probabilities. For \bens, there is no such trend except in the last batch, where \bens naturally degenerates to greedy behavior.
Second, along the whole search process, greedy has a decreasing trend, likely due to over-exploitation in early stages.
On the other hand, \bens has an increasing trend.
This could be partly due to more and more positives being found.
More importantly, we believe this trend is in part a reflection of the nonmyopia of \bens:
in early stages, it tends to explore the search space, so low probability points might be chosen. As the remaining budget diminishes, it becomes more exploitive; in particular, the last batch is purely exploitive.



\section{Conclusion and future directions}
In this paper, we introduced a principled approach to active search, where the goal is to identify as many positive points as possible in a given labeling budget. Several theoretical results are established, such as the hardness of this problem and the adaptivity gap between sequential and batch optimal policies. We also developed an efficient nonmyopic policy that can automatically balance exploration and exploitation. Its superior performance and nonmyopic behavior is demonstrated on both simulated and real datasets.
We believe our theoretical and emprical analysis constitute a valuable step towards more-effective application of (batch) active search in various important domains such as drug discovery and materials science.

However, there are still many interesting open problems on this topic. 
First, our hardness result was proved without assuming any restrictions on the problem instances. So one natural question is: can we identify the conditions under which efficient algorithms with bounded approximation ratio exist?

Second, nonmyopic policies crucially rely on the model correctness. If the model is wrong, nonmyopia might hurt. Given that we often do not know the true model, how to design nonmyopic policies robust to model misspecification? 

Third, we studied active search in a setting where we need to maximize utility under cost constraints. What if we are required to minimize the cost under utility constraints? This is a common setting in active learning (e.g. \citep{Chen:2013}), but much less understood for active search, and very important in practice. 

\section*{Acknowledgments}
We would like to thank all the anonymous reviewers for valuable feedbacks.
\acro{SJ}, \acro{GM}, and \acro{RG} were supported by the National Science Foundation (\acro{NSF}) under award number \acro{IIA}--1355406.
\acro{GM} was also supported by the Brazilian Federal Agency for Support and Evaluation of Graduate Education (\acro{CAPES}).
\acro{BM} was supported by a Google Research Award and by \acro{NSF} under awards \acro{CCF}--1830711, \acro{CCF}--1824303, and \acro{CCF}--1733873.

\newpage
\bibliography{ms}
\bibliographystyle{abbrvnat}
\end{document}